\documentclass[11pt]{article}
\usepackage[final]{acl}
\usepackage{times}
\usepackage{latexsym}
\usepackage[T1]{fontenc}
\usepackage[utf8]{inputenc}
\usepackage{microtype}
\usepackage{inconsolata}

\usepackage{graphicx}
\usepackage{amsmath,amssymb,amsthm}
\usepackage{booktabs}
\usepackage{multirow}
\usepackage{tikz}
\usetikzlibrary{arrows.meta,positioning,shapes.geometric,calc,fit,backgrounds}
\usepackage{pgfplots}
\pgfplotsset{compat=1.17}
\usepackage{float}
\usepackage{algorithm}
\usepackage{algpseudocode}
\usepackage{xcolor}

\makeatletter
\newenvironment{algorithmwide}[1][t]
  {\begin{figure*}[#1]\def\@captype{algorithm}}
  {\end{figure*}}
\makeatother

\definecolor{cReject}{RGB}{178,34,34}
\definecolor{cRelease}{RGB}{34,120,60}
\definecolor{cHuman}{RGB}{90,90,90}
\definecolor{cJudge}{RGB}{40,80,160}
\definecolor{cHarvest}{RGB}{150,90,20}

\newtheorem{proposition}{Proposition}

\newcommand{\boot}{\textsc{TaskBoot}}
\newcommand{\rsi}{\textsc{CertHarvest}}
\newcommand{\covr}{\mathrm{cov}}
\newcommand{\err}{\mathrm{err}}
\newcommand{\agentsuite}{\textsc{Tool}}
\newcommand{\tbench}{\textsc{Term}}
\newcommand{\arb}{\textsc{Web-A}}
\newcommand{\webm}{\textsc{Web-M}}
\newcommand{\webf}{\textsc{Web-F}}
\newcommand{\codeo}{\textsc{Code-O}}
\newcommand{\codes}{\textsc{Code-S}}

\title{Certified Selective Automation of LLM Agent Evaluation}

\author{%
Chengguang Gan\textsuperscript{1}, \enspace
Yunhao Liang\textsuperscript{2}, \enspace
Qinghao Zhang\textsuperscript{3}, \enspace
Shiwen Ni\textsuperscript{4} \\[2pt]
\textsuperscript{1}Independent Researcher \quad
\textsuperscript{2}University of Chinese Academy of Sciences \\
\textsuperscript{3}Pusan National University \quad
\textsuperscript{4}Shenzhen University of Advanced Technology \\[2pt]
\textbf{Correspondence:} \texttt{chengguangg1024@gmail.com}}

\begin{document}
\maketitle

\begin{abstract}
Evaluating LLM agents still ends with a human reading trajectories, because
automatic judges carry no guarantee on how often they are wrong. We ask the
operational question: \emph{what fraction of agent evaluation can a judge take
over, with a certificate that the error rate among auto-decided trajectories
stays below a budget $\alpha$?} Agent corpora resist the standard answer: many
agents attempt the same tasks, so trajectories arrive in correlated clusters,
and the i.i.d.\ certificates of existing selective-judging methods can
overstate what is safe: a naive certificate can claim $98\%$ automation while
its realized error exceeds the budget in $17.5\%$ of task resamples. We
introduce a task-level bootstrap certificate that is valid in every regime we
test while matching the naive certificate's coverage; finite-sample
cluster-valid alternatives certify nothing at realistic task counts. Under
this certificate, a 4B logprob judge trained with SFT and reject-weighted
GRPO certifies $0.30$--$0.59$ of evaluation on tool-use and web corpora at
$\alpha{=}0.1$, the only judge, among strongly elicited frontier models,
certifying on both headline corpora. Certified coverage is predictable before training from base
rate and discrimination alone (leave-one-corpus-out $R^2{=}0.96$). Finally,
the certificate doubles as a self-training filter: pseudo-labels harvested
inside certified regions have contamination bounded by $\alpha$ by
construction (realized $0.000$--$0.041$ across six harvests), letting a judge
enter an unseen domain at in-domain strength with zero target training labels.
\end{abstract}

\section{Introduction}
\label{sec:intro}

Whether an LLM agent actually completed its task is still, in practice, a
question answered by a person: programmatic checkers mislabel outcomes on live
websites \citep{lu2025agentrewardbench, xue2025illusion}, so benchmark authors
fall back on expert annotation of full trajectories, at minutes per
trajectory. LLM judges promise to absorb this work, and a growing line of
research measures how well they agree with humans \citep{zhuge2024agent,
pan2024autonomous}. Agreement, however, is not what a team deciding whether to
\emph{turn the judge on} needs to know. They need to know how much of the
queue the judge can take over before its mistakes exceed what they can
tolerate, and that number has to hold on the next batch, not just the last.

We study this question as a certification problem: given a judge, an error
budget $\alpha$, and a confidence level $1-\delta$, find the largest fraction
of trajectories that can be \emph{auto-decided} (rejected as failures or
released as successes) such that with probability $1-\delta$ the error rate
among auto-decided trajectories is at most $\alpha$. This \emph{certified
coverage} is the fraction of human evaluation work provably removed
(Figure~\ref{fig:example}). Certifying a fixed decision rule at a risk level
is well-trodden ground \citep{bates2021distribution, angelopoulos2025learn}, recently
applied to LLM judges of chatbot responses \citep{jung2025trust, badshah2026scope}.
Agent evaluation breaks the key assumption all of it rests on.

The break is structural. Agent corpora are built by running \emph{many} agents,
or many rollouts, against the \emph{same} tasks: five attempts at the same
booking flow succeed or fail together, because difficulty lives mostly in the
task. Trajectories therefore arrive in correlated clusters, violating the
exchangeability that i.i.d.\ certificates assume. The violation is not
cosmetic. On our most heavily clustered corpus (intra-task correlation
$\rho=0.80$), the standard i.i.d.\ certificate happily certifies $98\%$
automation; a task-resampling audit shows its realized error exceeding the
budget in $17.5\%$ of resamples, against a promised $5\%$
(Section~\ref{sec:setup}). The textbook fixes fail in the opposite direction:
a design-effect correction and the finite-sample cluster-valid constructions
(one trajectory per task, task-level Learn-then-Test) certify essentially
\emph{nothing} at realistic task counts (Section~\ref{sec:cert}). Between an
invalid certificate and a vacuous one, neither is usable.

Our first contribution is a certificate that is both valid and usable:
\boot{}, a task-level bootstrap test applied over a pre-declared threshold
grid with Bonferroni accounting. In a synthetic study with known ground truth
it violates its guarantee in at most $1\%$ of trials, within budget, down to
20 task clusters, while matching the naive certificate's coverage almost
exactly; on real corpora it certifies $0.30$--$0.84$ where every finite-sample
cluster-valid alternative certifies $0$. All certificates in this paper,
including those of the frontier judges we compare against, are computed by
this one procedure with thresholds calibrated on held-out tasks and audited
out-of-sample.

With the certificate fixed, the judge becomes the object of study, and three
findings emerge. First, a 4B logprob judge, fine-tuned then trained with a
reject-weighted GRPO objective, certifies $0.297$ on tool-use and $0.585$ on
web at $\alpha{=}0.1$, and is the only judge among strongly elicited frontier
models that certifies on both headline corpora; the one frontier configuration
that beats it costs roughly $100\times$ more per decision, and no monotone
recalibration can rescue the others, because the certificate depends on scores
only through their ranks (Section~\ref{sec:judges}). Second, certified
coverage is predictable before any training: a two-parameter relation on base
rate and base discrimination explains held-out coverage with
leave-one-corpus-out $R^2=0.96$, and three gates predict when reinforcement
learning adds coverage over supervised fine-tuning
(Section~\ref{sec:model}). Third, the certificate does double duty as a
self-training filter: pseudo-labels harvested inside certified regions have
contamination bounded by $\alpha$ by construction (realized
$0.000$--$0.041$ across six harvests), letting a judge enter an unseen domain
with zero target-domain training labels at in-domain strength, and lifting our
largest tool-use corpus beyond its best supervised judge
(Section~\ref{sec:rsi}).

\begin{figure*}[t]
\centering
\includegraphics[width=0.82\textwidth]{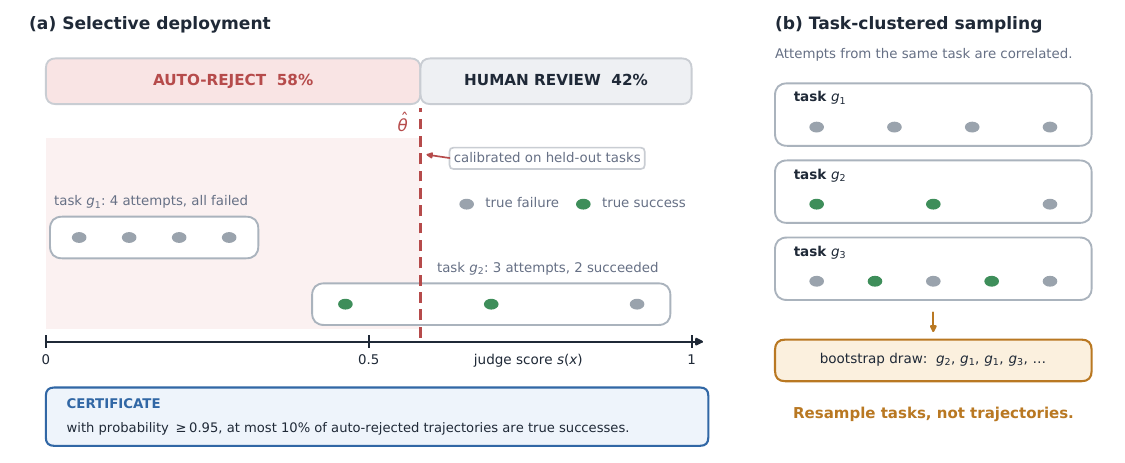}
\vspace{-5mm}
\caption{\textbf{Certified selective automation, on one corpus.} (a)~A
calibrated threshold auto-rejects low-scoring trajectories; the certificate
bounds the error rate \emph{among auto-decided trajectories}. (b)~Attempts at
one task are correlated: resample tasks, not trajectories.}
\label{fig:example}
\end{figure*}

The empirical scope is seven corpora: tool-use, terminal, three web corpora
(including 1{,}302 expert-annotated trajectories from five live benchmarks),
and code repair. The protocol is pre-registered: task-level splits fixed by
one seed, hyperparameters selected on calibration data only, one test read
per configuration. Corpora where certification fails stay in the main text:
judges are blind where ground truth never appears in the trajectory,
saturated corpora leave RL nothing to add, and thirteen test tasks cannot
support a certificate at all.

\section{Related work}
\label{sec:related}

\paragraph{Judging agent trajectories.}
That automatic evaluation of agents is unreliable is well documented:
rule-based evaluators under-report success on live websites, LLM judges
over-report it \citep{lu2025agentrewardbench, xue2025illusion}. Remedies train
or prompt better evaluators (trajectory judges \citep{pan2024autonomous},
agentic judges \citep{zhuge2024agent}, process reward models
\citep{NEURIPS2025_5bac3621}) and report agreement with human labels. We take
the step none of these take: attaching a statistical guarantee to the judge's
deployment, so that ``how good is the judge'' becomes ``how much work can it
provably absorb.'' \citet{lu2025agentrewardbench} supplies one of our corpora.

\paragraph{Selective prediction with guarantees.}
The machinery for risk-controlled selective decisions is mature: selective
classification \citep{geifman2017selective}, risk-controlling prediction sets
\citep{bates2021distribution}, Learn-then-Test \citep{angelopoulos2025learn},
conformal risk control \citep{angelopoulos2024conformal}. Two recent papers
instantiate it for LLM judges: \citet{jung2025trust} certify human-agreement
of a selective judge of chatbot responses, and \citet{badshah2026scope} wrap
a pairwise judge in conformal risk control. Both assume exchangeable
instances, and Section~\ref{sec:cert} shows that assumption is load-bearing:
the i.i.d.\ certificate becomes anti-conservative exactly in the multi-rollout
regime modern agent evaluation uses. Cluster-aware conformal methods exist
for hierarchical data \citep{dunn2023distribution, barber2023conformal}, but
their finite-sample constructions collapse to zero coverage at realistic task
counts; the working point between invalid and vacuous is, to our knowledge,
unoccupied. Prediction-powered evaluation \citep{boyeau2024autoeval} certifies
\emph{aggregate} metrics from judge labels plus few human labels; we certify
per-trajectory \emph{decisions}, a different estimand.

\paragraph{Training judges, and judges that train themselves.}
RL is now standard for improving judge accuracy
\citep{whitehouse2026j1, xu2026j4r, wang2024self}, with GRPO
\citep{shao2024deepseekmathpushinglimitsmathematical} the common optimizer; our GRPO stage differs in
objective, not machinery: an asymmetric reward aimed at certified coverage,
whose direction we ablate. Self-improving models
\citep{yuan2024self, wu2025meta, zhang2026darwin,
zhao2026absolute} gate their own training data with heuristics, and the
failure mode is documented: pseudo-label accuracy in R-Zero decays from
$79\%$ to $63\%$ over iterations \citep{huang2026r}. Conformal filters
appear in classical semi-supervised learning \citep{lienen2023conformal,
tanha2022cpssds} and in one-shot auto-labeling with FDR control
\citep{huang2025model}, but no prior system closes the loop in
which a \emph{deployment certificate} gates harvesting, training, and
re-certification; Section~\ref{sec:rsi} builds that loop and measures where
its guarantee ends. Finally, fine-tuned small judges are known to match large
ones in-domain \citep{kim2024prometheus, zhu2025judgelm} and frontier
logprobs to be miscalibrated after RLHF \citep{tian2023just,
kadavath2022language}; our frontier comparison sharpens both observations
with the certificate as the yardstick, and Proposition~\ref{prop:rank} shows
recalibration cannot change the verdict.

\section{Certified selective automation, and why clustering breaks it}
\label{sec:setup}

\paragraph{Setup.}
A corpus is a set of trajectories $x_{gj}$, where $g \in \{1,\dots,G\}$ indexes
\emph{tasks} and $j \in \{1,\dots,m_g\}$ indexes attempts at task $g$ by
different agents or rollouts. Each trajectory carries a binary outcome
$y_{gj}\in\{0,1\}$ ($1$: the agent truly completed the task), obtained from
expert annotation or a trusted oracle. A judge maps a trajectory to a score
$s(x)\in[0,1]$, its estimate of $\Pr(y{=}1\,|\,x)$. A \emph{selective
automation rule} is a pair of thresholds $(\theta_{\mathrm{rej}},
\theta_{\mathrm{rel}})$: trajectories with $s(x)\le\theta_{\mathrm{rej}}$ are
auto-rejected, those with $s(x)\ge\theta_{\mathrm{rel}}$ are auto-released, and
the rest go to a human. The two automated decisions carry different risks and
are certified separately:
\begin{align}
\err_{\mathrm{rej}}(\theta) &= \Pr\!\big(y{=}1 \,\big|\, s(x)\le\theta\big),
\nonumber\\
\err_{\mathrm{rel}}(\theta) &= \Pr\!\big(y{=}0 \,\big|\, s(x)\ge\theta\big),
\label{eq:errors}
\end{align}
i.e., the rate of \emph{discarded successes} on the reject side and of
\emph{failures shipped as successes} on the release side. For a side with error
functional $\err$ and coverage $\covr(\theta)=\Pr(\text{auto-decided at }
\theta)$, the object we report is
\begin{align}
\covr^\star(\alpha,\delta) \;=\; \max_{\hat\theta}\;&\covr(\hat\theta)
\nonumber\\
\text{s.t.}\;\;
\Pr\big(\err(\hat\theta) > \alpha\big) &\le \delta,
\label{eq:certcov}
\end{align}
with the probability taken under the full selection procedure: the largest
certifiable fraction of evaluation work removed at error budget $\alpha$ with
confidence $1-\delta$. Throughout, $\delta=0.05$ and
$\alpha\in\{0.1, 0.2\}$; the reject side is the main object because it is where
our corpora admit non-trivial certificates, and the release side is reported
where it is non-zero.

\begin{figure*}[t]
\centering
\includegraphics[width=0.82\textwidth]{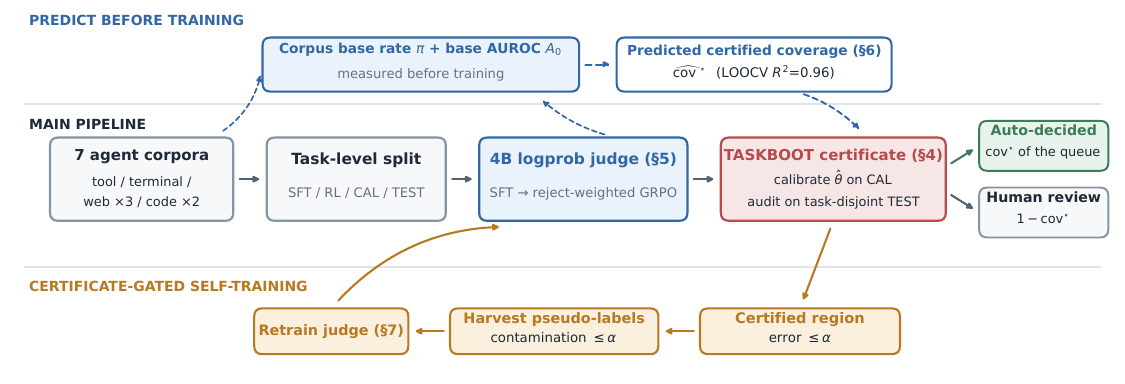}
\vspace{-5mm}
\caption{\textbf{Overview.} One certificate fixes the deployment operating
point (\S\ref{sec:cert}), is predictable from corpus structure before training
(\S\ref{sec:model}), and gates a self-training loop whose contamination is
bounded by the same $\alpha$ (\S\ref{sec:rsi}).}
\label{fig:overview}
\end{figure*}

\paragraph{Corpora and protocol.}
We build seven corpora spanning the domains agents are actually evaluated on
(Table~\ref{tab:corpora}; details in Appendix~\ref{app:corpora}): tool-use
dialogues ({\agentsuite}, from $\tau^2$-bench), terminal sessions ({\tbench}),
three web corpora, {\arb} (1{,}302 expert-annotated trajectories over five
live web benchmarks) and two MiniWoB++ corpora with weak and frontier
agents, and two code-repair corpora. Every corpus is split \emph{by task}
into SFT/RL/CAL/TEST with one declared seed; every selected hyperparameter
uses CAL only, and TEST is read once per configuration. Trajectories are
rendered as text with no reward or oracle signal included: the judge must
infer the outcome from what the agent saw and did.

\begin{table*}[t]
\centering
\caption{Corpora. $n$: test trajectories; $G$: test tasks; $\pi$: success
rate; $\rho$: intra-task correlation of outcomes; $\mathrm{DEFF} =
1+(\tilde m-1)\rho$ with $\tilde m$ the size-weighted mean cluster size;
$n_{\mathrm{eff}} = n/\mathrm{DEFF}$.}
\label{tab:corpora}
\small
\begin{tabular}{@{}llrrrrrr@{}}
\toprule
corpus & domain & $n$ & $G$ & $\pi$ & $\rho$ & DEFF & $n_{\mathrm{eff}}$ \\
\midrule
\agentsuite{} ($\tau^2$) & tool-use dialogues & 1280 & 84 & .48 & .25 & 5.0 & 257 \\
\tbench{} & terminal sessions & 520 & 13 & .34 & .43 & 17.6 & 30 \\
\arb{} (ARB) & live web, expert labels & 200 & 68 & .26 & .49 & 2.2 & 91 \\
\textsc{Web-M} & MiniWoB++, 4B agents & 512 & 32 & .16 & .80 & 13.0 & 39 \\
\textsc{Web-F} & MiniWoB++, frontier agents & 256 & 32 & .26 & .81 & 6.7 & 38 \\
\textsc{Code-O} & code repair (OpenHands) & 899 & 382 & .45 & .64 & 2.0 & 459 \\
\textsc{Code-S} & code repair (multilingual) & 21 & 7 & .24 & .00 & 1.0 & 21 \\
\bottomrule
\end{tabular}
\end{table*}

\paragraph{Clustering is large, and it breaks the i.i.d.\ certificate.}
Outcomes within a task are strongly correlated: $\rho$ ranges from $0.25$ to
$0.81$ on our corpora, and design effects reach $17.6$: each terminal task
carries the information of roughly two independent trajectories, not forty.
The standard certificate ignores this: the construction shared by
\citet{jung2025trust} and \citet{badshah2026scope} instantiates here as certifying
$\theta$ whenever the Clopper--Pearson upper bound at level $\delta/|\Theta|$
on the auto-decided error, computed \emph{as if trajectories were
independent}, is at most $\alpha$. We audited it by resampling \emph{tasks}
and recomputing the realized error at its selected threshold. On \webm{}
($\rho=0.80$, $G=32$, eight rollouts per task; the pass@$k$ evaluation
pattern) it selects a threshold covering $98\%$ of trajectories at
$\alpha=0.2$, and its realized error exceeds the budget in $17.5\%$ of task
resamples, three and a half times the promised $\delta=5\%$ ($6.9\%$ at
$\alpha=0.1$). On mildly clustered corpora the audit passes ($\le 3.2\%$;
Appendix~\ref{app:audits}): the failure is precisely the high-$\rho$,
few-task, many-rollout regime that modern agent evaluation produces, and a
usable certificate has to survive it.

\section{\boot{}: a task-level bootstrap certificate}
\label{sec:cert}

The certificate must respect two constraints that pull in opposite directions:
validity under task clustering, and non-vacuity at $G\in[10,100]$ tasks, which
is what real corpora provide. \boot{} resolves the tension by testing each
candidate threshold against the task-resampling distribution of its own error.

\paragraph{Procedure.}
Fix a side (say reject), a pre-declared grid $\Theta$ of $|\Theta|=40$ score
quantiles, and calibration data $\{(s_{gj}, y_{gj})\}$ grouped into $G$ tasks.
For a threshold $\theta$, let
\begin{equation}
\label{eq:booterr}
\widehat{\err}^{(b)}(\theta) \;=\;
\frac{\sum_{g\in\mathcal{B}_b}\, k_g(\theta)}{\sum_{g\in\mathcal{B}_b}\, n_g(\theta)},
\qquad b = 1,\dots,B,
\end{equation}
where $\mathcal{B}_b$ is a multiset of $G$ tasks drawn with replacement,
$n_g(\theta)=\#\{j: s_{gj}\le\theta\}$ and $k_g(\theta)=\#\{j: s_{gj}\le\theta,\,
y_{gj}=1\}$ are the per-task covered and erroneous counts. Threshold $\theta$
is \emph{certified} if the upper $\big(1-\delta/|\Theta|\big)$ empirical
quantile of $\{\widehat{\err}^{(b)}(\theta)\}_{b=1}^{B}$ is at most $\alpha$;
the procedure returns the certified threshold of maximal coverage
(Algorithm~\ref{alg:boot}), with $\delta/|\Theta|$ a Bonferroni correction
over the grid. This is an approximation, not a finite-sample theorem: the
bootstrap quantile consistently estimates the clustered error ratio's sampling
distribution as $G$ grows \citep{field2007bootstrapping}, and the question
that matters (does the approximation already hold at the $G$ we have?) we
answer by simulation with known ground truth and by out-of-sample audits on
every real corpus.

\begin{algorithmwide}[t]
\caption{\boot{} (one side; reject shown)}
\label{alg:boot}
\begin{algorithmic}[1]
\Require task-grouped calibration scores $\{s_{gj},y_{gj}\}$; $\alpha$, $\delta$, $|\Theta|$, $B$
\State $\Theta \gets$ empirical score quantiles at levels
$\mathrm{linspace}(0.02, 0.98, |\Theta|)$
\For{$\theta \in \Theta$}
  \State compute per-task counts $n_g(\theta), k_g(\theta)$
  \For{$b = 1,\dots,B$}
     \State draw $G$ tasks with replacement; compute
     $\widehat{\err}^{(b)}(\theta)$ by Eq.~(\ref{eq:booterr})
  \EndFor
  \State $q(\theta) \gets$ empirical $\big(1-\delta/|\Theta|\big)$-quantile of
  $\{\widehat{\err}^{(b)}(\theta)\}$
\EndFor
\State \Return $\hat\theta = \arg\max\{\covr(\theta) : \theta\in\Theta,\;
q(\theta)\le\alpha\}$ \Comment{$\emptyset \Rightarrow$ certify nothing}
\end{algorithmic}
\end{algorithmwide}

\begin{figure*}[t]
\centering
\includegraphics[width=0.82\textwidth]{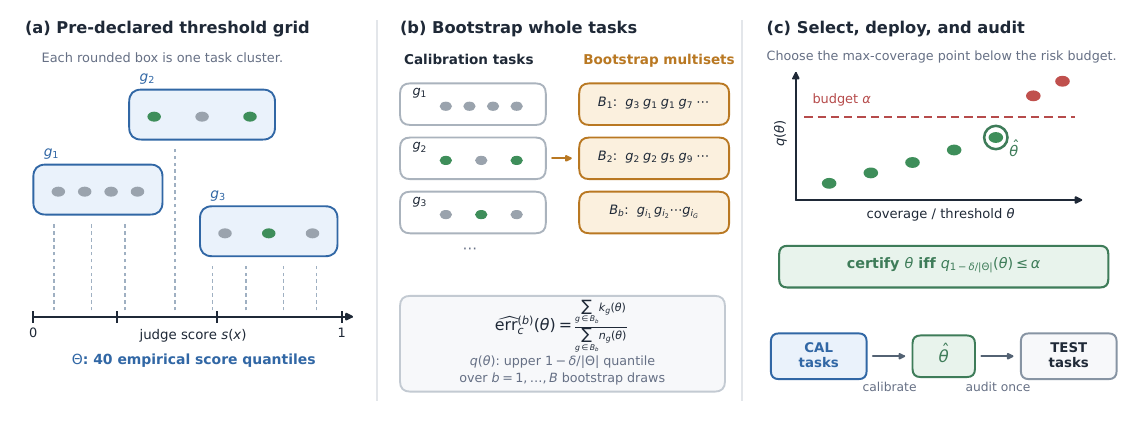}
\vspace{-5mm}
\caption{\textbf{\boot{} in one picture:} a pre-declared quantile grid (a), a
task-resampled test of each threshold's selective error (b), max-coverage
selection with calibrate-on-CAL / audit-on-TEST deployment (c).}
\label{fig:method}
\end{figure*}

\paragraph{Validity with known ground truth.}
We simulate clustered judge scores from a task-effect model: task effects
$u_g\sim\mathcal N(0,\tau^2)$ induce label correlation
$\rho\in\{0.1,0.5,0.8\}$ at $G\in\{20,50,100,500\}$ tasks; each procedure
selects a threshold on a simulated calibration draw, and we measure that
threshold's \emph{true} selective error on a fresh population of
$1.5\times10^5$ trajectories, 300 trials per cell (full protocol and grid in
Appendix~\ref{app:sim}). \boot{} violates its guarantee in at most $1\%$ of
trials in every cell, including $G{=}20$, while matching naive
Clopper--Pearson's coverage to within a point. The finite-sample cluster-valid
alternatives are valid and useless: one trajectory per task, or a Hoeffding
bound on task-mean losses, certifies zero coverage until $G$ reaches the
hundreds, and the design-effect correction $n\mapsto n/\mathrm{DEFF}$
\citep{kish1965survey} is similarly blunt at high $\rho$. Three adversarial
designs (matched to \webm{}'s parameters, size--outcome correlation,
heavy-tailed cluster sizes) did not break the naive certificate at population
level (its Bonferroni slack absorbs a lot), so the evidence is stated
precisely: naive certification fails the task-resampling audit on real
high-$\rho$ data ($17.5\%$, Section~\ref{sec:setup}), and we could not
construct any regime where \boot{} fails. Uniformly safe, at no cost in
coverage.

\paragraph{On real corpora, the alternatives certify nothing.}
Running all five procedures on the real test scores
(Appendix~\ref{app:baselines}): one-per-task CP and task-mean Hoeffding
certify exactly $0$ on \emph{all seven corpora} ($G$ from 7 to 382), and
DEFF-corrected CP certifies $0$ everywhere at $\alpha=0.1$, while \boot{}
certifies $0.30$--$0.84$ wherever a usable judge exists. Between a certificate
that can overpromise ($98\%$ on \webm{}) and certificates that always promise
nothing, \boot{} is the only occupant of the working regime.

\paragraph{Out-of-sample audits.}
Because the guarantee is asymptotic in $G$, every deployment in this paper is
audited: $\hat\theta$ is calibrated on CAL tasks, and the realized selective
error is measured once on task-disjoint TEST. All audits pass: e.g., realized
reject error at $\alpha=0.1$ is $0.027$--$0.037$ on \arb{} and
$0.011$--$0.012$ on \agentsuite{} (all listed in Appendix~\ref{app:audits}).
So every certificate below is backed twice: by simulation where truth is
known, and by held-out audits where it is not.

\begin{figure}[t]
\centering
\begin{tikzpicture}
\begin{axis}[
  width=1.02\columnwidth, height=0.56\columnwidth,
  xlabel={$(1-\pi)(2A_0-1)$ \, (measured before training)},
  ylabel={best certified coverage @ $\alpha{=}0.1$},
  xmin=-0.02, xmax=0.9, ymin=-0.04, ymax=0.9,
  grid=both, grid style={black!10},
  tick label style={font=\scriptsize},
  label style={font=\scriptsize},
  legend style={font=\tiny, at={(0.5,-0.34)}, anchor=north,
                legend columns=2, draw=black!20, /tikz/every even column/.append style={column sep=6pt}},
  legend cell align=left,
]
\addplot[domain=0:0.87, thick, black!60] {1.075*x - 0.048};
\addlegendentry{fit: $1.08x - 0.05$ (LOOCV $R^2{=}0.96$)}
\addplot[only marks, mark=*, mark size=2.4pt, cRelease]
  coordinates {(0.308,0.321) (0.588,0.585)};
\addlegendentry{trainable (RL gains)}
\addplot[only marks, mark=square*, mark size=2.2pt, cJudge]
  coordinates {(0.809,0.836) (0.717,0.758)};
\addlegendentry{saturated (SFT/base ceiling)}
\addplot[only marks, mark=triangle*, mark size=2.8pt, cReject]
  coordinates {(0.034,0.0)};
\addlegendentry{judge-blind}
\addplot[only marks, mark=diamond*, mark size=2.6pt, black!45]
  coordinates {(0.482,0.379) (0.629,0.619)};
\addlegendentry{$G<20$}
\node[font=\tiny, anchor=west] at (axis cs:0.045,0.015) {\codeo};
\node[font=\tiny, anchor=south] at (axis cs:0.308,0.34) {\agentsuite};
\node[font=\tiny, anchor=north] at (axis cs:0.482,0.365) {\tbench};
\node[font=\tiny, anchor=south] at (axis cs:0.588,0.60) {\arb};
\node[font=\tiny, anchor=north west] at (axis cs:0.635,0.615) {\codes};
\node[font=\tiny, anchor=south east] at (axis cs:0.809,0.845) {\webm};
\node[font=\tiny, anchor=north west] at (axis cs:0.717,0.75) {\webf};
\end{axis}
\end{tikzpicture}
\vspace{-2mm}
\caption{\textbf{Certifiability is measurable ex-ante.} Best certified reject
coverage against the pre-training index $(1-\pi)(2A_0-1)$. The largest
residual (\tbench{}) is the smallest-$n_{\mathrm{eff}}$ corpus, where
finite-sample slack in the certificate binds before judge quality does.}
\label{fig:model}
\end{figure}

\section{Judges under the certificate}
\label{sec:judges}

\paragraph{A small logprob judge.}
The judge is a 4B-parameter instruction model prompted as a skeptical auditor:
it reads the rendered trajectory and must answer with a single verdict token.
Its score is the renormalized next-token probability
\begin{equation}
\label{eq:score}
s(x) \;=\; \frac{p_{\mathrm{LM}}(\texttt{SUCCESS}\mid x)}
{p_{\mathrm{LM}}(\texttt{SUCCESS}\mid x) + p_{\mathrm{LM}}(\texttt{FAIL}\mid x)},
\end{equation}
read in a single forward pass with reasoning disabled: every trajectory parses
by construction and the score is continuous, both properties the certificate
needs. Training has two stages. \emph{SFT} fine-tunes on verdict-labeled
trajectories from the task-disjoint SFT split (Appendix~\ref{app:training});
\emph{GRPO} \citep{shao2024deepseekmathpushinglimitsmathematical} then
optimizes an asymmetric reward on the RL split: for sampled verdict $v$
against label $y$,
\begin{align}
r(v, y) \;=\; \mathbf{1}[v = y]
\;&-\; \lambda\,\mathbf{1}[v{=}\texttt{S}, y{=}0] \nonumber\\
&-\; \mu\,\mathbf{1}[v{=}\texttt{F}, y{=}1],
\label{eq:reward}
\end{align}
so $\mu>\lambda$ penalizes discarding successes (the reject-side error) more
than shipping failures, pushing probability mass out of the reject tail
exactly where the certificate reads it. The strength $\mu\in\{1,3,5\}$ (at
$\lambda{=}1$) is selected \emph{on CAL by certified coverage}, never on TEST;
the test-optimal arm is reported once, labeled as an oracle.

\begin{table*}[t]
\centering
\caption{Main result: certified reject coverage at $\alpha{=}0.1$,
$\delta{=}0.05$ (\boot{}, threshold calibrated on CAL), and test AUROC.
GRPO arm selected on CAL. Grey: $G$ below the simulation-validated regime
($G\ge 20$), certificate reported for completeness only. Oracle-arm and
$\alpha{=}0.2$ results in Appendix~\ref{app:fulltables}.}
\label{tab:main}
\small
\setlength{\tabcolsep}{4.2pt}
\begin{tabular}{@{}lcccccccl@{}}
\toprule
& \multicolumn{3}{c}{AUROC} & \multicolumn{3}{c}{certified coverage @ $\alpha{=}0.1$} & & \\
\cmidrule(lr){2-4}\cmidrule(lr){5-7}
corpus & base & SFT & GRPO & base & SFT & GRPO & $\Delta_{\mathrm{RL}}$ & regime \\
\midrule
\agentsuite{} & .798 & .899 & .903 & .000 & .293 & \textbf{.297} & $+.004$ & trainable \\
\arb{}  & .897 & .925 & .922 & .510 & .560 & \textbf{.585} & $+.025$ & trainable \\
\webm{} & .983 & .976 & .977 & \textbf{.832} & .789 & .836 & $+.047$ & saturated \\
\webf{} & .983 & .991 & .994 & .707 & \textbf{.758} & .758 & $\pm.000$ & saturated \\
\codeo{} & .531 & .645 & .530 & .000 & .000 & .000 & --- & judge-blind \\
\color{black!50}\tbench{} & \color{black!50}.864 & \color{black!50}.871 &
\color{black!50}.873 & \color{black!50}.290 & \color{black!50}.379 &
\color{black!50}.369 & \color{black!50}$-.010$ & \color{black!50}$G{=}13$ \\
\color{black!50}\codes{} & \color{black!50}.913 & \color{black!50}.800 &
\color{black!50}--- & \color{black!50}.619 & \color{black!50}.619 &
\color{black!50}--- & \color{black!50}--- & \color{black!50}$G{=}7$ \\
\bottomrule
\end{tabular}
\end{table*}

\paragraph{Main result.}
Table~\ref{tab:main} is the paper's central table. On the two trainable
corpora the ordering $\text{GRPO} > \text{SFT} > \text{base}$ holds under the
clean selection protocol: \agentsuite{} goes $0 \to .293 \to .297$ (the
test-oracle arm reaches $.321$; every arm, including the worst, stays at or
above SFT), and \arb{} goes $.510 \to .560 \to .585$, where CAL independently
selects the arm the oracle would. The margins are modest; what makes them
meaningful is that each is a certified, audited increment in evaluation work
removed under a pre-registered protocol. The remaining corpora fail for three
diagnosable reasons that Section~\ref{sec:model} turns into a predictive
model: on \codeo{} the outcome is decided by a held-out test suite that never
appears in the trajectory, so no judge, trained or frontier, gets traction;
on \webm{}/\webf{} the base judge is already near-perfect and SFT saturates
what the budget allows; \tbench{} and \codes{} simply lack tasks.

\paragraph{Which reward direction matters, and where.}
Equation~(\ref{eq:reward}) has three natural arms: accuracy
($\lambda{=}\mu{=}1$), reject-weighted ($\mu{>}\lambda$), release-weighted
($\lambda{>}\mu$). On \agentsuite{} the direction is the effect: the
reject-weighted arm certifies $.321$ against $.297$ for accuracy and $.293$
(no gain over SFT) for release-weighted. Generic RL polish is not what moves
reject coverage there. On \arb{} all three arms land on the same $.585$
plateau: from a strong SFT start, any GRPO polish purifies the tail, and the
direction stops being separable. Both mechanisms are real; a practitioner
should try the reject-weighted arm first and expect it to matter most when the
SFT judge still has headroom (full table in Appendix~\ref{app:fulltables}).

\begin{table*}[t]
\centering
\caption{Strongly elicited frontier judges vs.\ the trained 4B judge, certified
by the same procedure. CoT: chain-of-thought then a verbalized probability;
SC-$k$: vote share over $k$ samples at $T{=}1$; logprob: native token
probability. Cost: relative inference cost per decision.}
\label{tab:frontier}
\small
\setlength{\tabcolsep}{4.5pt}
\begin{tabular}{@{}llcccc@{}}
\toprule
judge & elicitation & \multicolumn{2}{c}{\arb{}} & \multicolumn{2}{c}{\agentsuite{}} \\
\cmidrule(lr){3-4}\cmidrule(lr){5-6}
 & & AUROC & cert@.1 & AUROC & cert@.1 \\
\midrule
gpt-5.6-sol & CoT $+$ prob. & \textbf{.929} & \textbf{.704} & .794 & .000 \\
claude-sonnet-5 & CoT $+$ prob. & .905 & .497 & .847 & .000 \\
gemini-2.5-pro & CoT $+$ prob. & .753 & .000 & --- & --- \\
gemini-2.5-pro & SC-5 & .680 & .000 & --- & --- \\
claude-sonnet-5 & SC-10 & .686 & .000 & --- & --- \\
gpt-4o & logprob & .753 & .000 & --- & --- \\
\midrule
trained 4B (ours) & logprob & .922 & .585 & \textbf{.903} & \textbf{.297} \\
\bottomrule
\end{tabular}
\end{table*}

\paragraph{Frontier judges certify inconsistently, and recalibration cannot help.}
Table~\ref{tab:frontier} runs the elicitations a skeptical reviewer would
demand. The picture is not ``small beats frontier'' (the strongest reasoning
model beats our judge on \arb{}, $.704$ vs $.585$, at roughly $100\times$ the
inference cost) but that frontier judging does not \emph{transfer}: the same
model certifies \emph{zero} on \agentsuite{} at both budgets (AUROC $.794$,
matching our \emph{untrained} base), four of six frontier configurations
certify nothing anywhere, and only the trained 4B certifies on both corpora.
These are ranking failures, not score-scale failures, and not fixable
downstream:

\begin{proposition}[Rank invariance]
\label{prop:rank}
\boot{}'s certified coverage is invariant to any strictly increasing
transformation of the judge's scores. In particular, Platt scaling, isotonic
regression, and temperature scaling leave every certificate in
Table~\ref{tab:frontier} unchanged.
\end{proposition}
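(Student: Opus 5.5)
The argument tracks every quantity that Algorithm~\ref{alg:boot} computes when we replace the scores $s_{gj}$ by $s'_{gj}=f(s_{gj})$ with $f$ strictly increasing, and checks that each one is unchanged. The only place the scores enter is through the events $\{s_{gj}\le\theta\}$. So it suffices to produce a bijection between the two threshold grids that preserves all of these events. From that, the per-task counts, the bootstrap errors, the quantiles, the certified set and the selected coverage all follow.

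\emph{Step 1 (the grid).} Line~1 of Algorithm~\ref{alg:boot} sets $\Theta$ to the empirical score quantiles at the fixed levels $\mathrm{linspace}(0.02,0.98,|\Theta|)$. I will fix the quantile convention to the order-statistic (type-1, lower inverse CDF) definition, $\theta_\ell = s_{(\lceil \ell N\rceil)}$. Since $f$ is strictly increasing, it preserves the order of the pooled calibration scores, so $s'_{(i)}=f(s_{(i)})$ for every $i$, and therefore $\theta'_\ell=f(\theta_\ell)$.

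This step is the main obstacle. If the quantiles were instead computed by linear interpolation between order statistics, we would get $\theta'_\ell\ne f(\theta_\ell)$ for nonlinear $f$. I expect the claim still survives, because the quantity that matters is not the threshold value itself. A threshold interpolated between $s_{(i)}$ and $s_{(i+1)}$ with $s_{(i)}<s_{(i+1)}$ covers exactly the first $i$ points. Its transform lies between $s'_{(i)}$ and $s'_{(i+1)}$, so it covers the same $i$ points. When $s_{(i)}=s_{(i+1)}$, the interpolated value coincides with the common score and the transformed one coincides with its image, so ties are preserved as well. I would state this as a short lemma: each grid point $\theta'_\ell$ selects the same covered set as $\theta_\ell$. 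Only the covered set is used in what follows.

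\emph{Step 2 (counts and bootstrap).} Strict monotonicity gives $s_{gj}\le\theta_\ell \iff f(s_{gj})\le f(\theta_\ell)$, which extends to the interpolated case by the lemma. Hence $n_g(\theta_\ell)$ and $k_g(\theta_\ell)$ are identical for every task $g$ and every grid index $\ell$. The bootstrap resampling in Eq.~(\ref{eq:booterr}) draws tasks, not scores, so under a shared random seed (or in distribution, for a fresh one) the multisets $\mathcal B_b$ are the same. It follows that each $\widehat{\err}^{(b)}(\theta_\ell)$ is unchanged, as are $q(\theta_\ell)$ and the certified index set $\{\ell: q(\theta_\ell)\le\alpha\}$.

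\emph{Step 3 (coverage and deployment).} The empirical coverage $\covr(\theta_\ell)$ is the fraction of trajectories with $s\le\theta_\ell$, which is again unchanged. So the argmax picks the same index $\ell^\star$ and reports the same certified coverage, and in the empty case both procedures certify nothing.

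For the audited TEST numbers in Table~\ref{tab:frontier}, the same monotonicity equivalence applies to test scores: $s\le\theta_{\ell^\star}\iff f(s)\le f(\theta_{\ell^\star})$. This holds whenever the deployed threshold is $f(\theta_{\ell^\star})$, which Step~1 guarantees under the order-statistic convention. Therefore the realized test coverage and the realized test error are identical as well.

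Finally, Platt scaling (a logistic function of an affine map with positive slope) and temperature scaling with $T>0$ are strictly increasing, so the proposition applies to them directly. Isotonic regression is only weakly monotone and can merge distinct scores into ties. For it I would either state the result for its strictly increasing interpolant or note explicitly that ties can only coarsen the grid, and then restrict the claim to the strictly increasing case as written.
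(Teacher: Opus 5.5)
Your proof is correct and follows the same route as the paper's. Both arguments show the quantile grid transforms as $\tilde\Theta=\phi(\Theta)$, so that every comparison $s\le\theta$ is preserved, and hence every per-task count, bootstrap ratio, quantile test and coverage in Algorithm~\ref{alg:boot} is unchanged; both then note that Platt, temperature and (tie-broken) isotonic maps are monotone. Your explicit handling of the quantile convention, which the paper's claim $\tilde\Theta=\phi(\Theta)$ tacitly assumes to be order statistics, and of the CAL$\to$TEST deployed threshold is extra care beyond the paper's short proof, not a different argument.
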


The proof is immediate: the grid is built from score quantiles and every
count in Eq.~(\ref{eq:booterr}) depends on scores only through comparisons
(Appendix~\ref{app:rank}). Still, the consequence is worth stating: a judge that
certifies $0$ cannot be rescued by recalibration, only by reordering, i.e.\ by
a different judge.

\paragraph{What the coverage buys.}
At six minutes of review per trajectory, Table~\ref{tab:main}'s operating
points remove roughly 30 reviewer-hours per thousand trajectories on
\agentsuite{} and 59 on \arb{}, with error $\le 0.1$ guaranteed on the removed
portion (Appendix~\ref{app:cost}); the judge runs on one local GPU, keeps
trajectories on-premise, and can grow its own coverage (\S\ref{sec:rsi}).

\begin{table*}[t]
\centering
\caption{Certificate-gated self-training (certified reject coverage @
$\alpha{=}0.1$); transfer judges never saw the target corpus.}
\label{tab:rsi}
\scriptsize
\setlength{\tabcolsep}{4pt}
\begin{tabular}{@{}llccl@{}}
\toprule
target corpus & judge & cert@.1 & harvest (contam.) & target labels \\
\midrule
\multirow{4}{*}{\arb{}}
 & transfer (round 0) & .575 & --- & 0 \\
 & \;$+$ \rsi{} round 1 & \textbf{.585} & 262/914 \;(.008) & 0 \\
 & 175 CAL labels spent on SFT instead & .540 & --- & 175 \\
 & in-domain ceiling (SFT$\to$GRPO, \S\ref{sec:judges}) & .585 & --- & 914 \\
\midrule
\multirow{2}{*}{\webm{}}
 & transfer (round 0) & \textbf{.830} & --- & 0 \\
 & \;$+$ \rsi{} round 1 & .820 & 837/1296 \;(.002) & 0 \\
\midrule
\multirow{3}{*}{\agentsuite{}}
 & in-domain SFT & .293 & --- & 2600 \\
 & \;$+$ \rsi{} round 1 & .343 & 296/2200 \;(.000) & $+0$ \\
 & \;$+$ \rsi{} round 2 & \textbf{.347} & 338/2200 \;(.003) & $+0$ \\
\bottomrule
\end{tabular}
\end{table*}

\section{Certified coverage is predictable before training}
\label{sec:model}

Seven corpora, three failure regimes, two successes: the pattern is regular
enough to model. Let $\pi$ be the corpus success rate and $A_0$ the AUROC of
the \emph{untrained} base judge, both measurable from a few hundred labeled
trajectories before any training run. $(1-\pi)$ is the reject side's raw
material and $(2A_0-1)$ rescales base discrimination; their product predicts
certified coverage remarkably well:
\begin{equation}
\label{eq:law}
\widehat{\covr}^\star \;=\; a\,(1-\pi)(2A_0 - 1) + b,
\end{equation}
with $a{=}1.08$, $b{=}{-}0.05$ fit across the seven corpora against the best
certified coverage each attains (Figure~\ref{fig:model}). Leave-one-corpus-out, the relation explains
$R^2 = 0.96$ of held-out coverage with mean absolute error $0.039$; in-sample
$R^2 = 0.98$. Both factors earn their place: dropping $(1-\pi)$ collapses
LOOCV $R^2$ to $0.38$, and AUROC alone reaches only $0.47$; an earlier version
carried a sample-size saturation factor that cross-validation rejects as
unnecessary (Appendix~\ref{app:model}). This is an empirical model, not a
law (seven points, one seed), but its practical content survives the caveat:
a team can score a few hundred trajectories with the \emph{base} judge and
read off, before spending a GPU-hour on training, roughly what fraction of
its evaluation queue is certifiably automatable.

\paragraph{Three gates for when RL helps.}
Whether GRPO adds coverage \emph{on top of SFT} follows from three conditions,
each visible in Table~\ref{tab:main}:
\begin{equation}
\label{eq:gates}
\underbrace{A_0 \gg 0.5}_{\text{judge can learn}} \,\wedge\,
\underbrace{A_{\mathrm{SFT}} < A_{\mathrm{sat}}}_{\text{SFT not saturated}}
\,\wedge\,
\underbrace{n_{\mathrm{eff}} \gtrsim n_{\min}(\alpha,\delta)}_{\text{enough
samples}},
\end{equation}
where $n_{\min}$ is the smallest effective sample a certificate can act on
(${\approx}29$ at $\alpha{=}0.1$, $\delta{=}0.05$: the zero-error
Clopper--Pearson point). \agentsuite{} and \arb{} pass all three gates and
show RL gains. \codeo{} fails the first: the outcome lives in a held-out test
suite, not in the trajectory, so the judge is blind and every training method
inherits the same $\approx 0.53$--$0.65$ ceiling, a property of the domain,
not the dataset: a second ingested code corpus ($6{,}306$ tasks) lands AUROC
in the same band. \webm{} and
\webf{} fail the second: base AUROC $0.983$ leaves SFT nothing to purify that
the budget can see, and no agent mix we tried (4B, 8B, two frontier models
generating fresh trajectories) changed the judgeability of the corpus.
\tbench{} ($n_{\mathrm{eff}}{=}30$) and \codes{} ($21$) fail the third:
whatever the judge does, the certificate cannot resolve improvements smaller
than its own finite-sample slack. The same effect appears in selection:
\agentsuite{}'s CAL certifies $0.150/0.149/0.145$ across the three GRPO
arms (too close to rank), so Table~\ref{tab:main} reports the CAL choice
bracketed by the best and worst arms. The gates are stated post-hoc; their
value is that each is measurable before training, and Section~\ref{sec:rsi}
uses them prospectively.

\section{The certificate as a self-training filter}
\label{sec:rsi}

Self-improving models share a weakness: the filter deciding which
self-generated labels to trust is heuristic, and when it drifts, contamination
compounds silently \citep{yuan2024self, huang2026r}. We already
have a non-heuristic filter: pseudo-labels harvested from a region certified
at budget $\alpha$ are, by definition of the certificate, wrong at rate at
most $\alpha$ (with probability $1-\delta$).
\rsi{} makes this operational (Algorithm~\ref{alg:rsi}, Appendix~\ref{app:rsi}): calibrate a certified
reject region on one half of CAL's tasks, pseudo-label its contents as
failures, retrain, re-certify: always on human-labeled CAL/TEST tasks that
pseudo-labels never touch, so the loop cannot erode its own guarantee.

\paragraph{The bound holds every time it is used.}
Across six harvests (three corpora, pools of $914$--$2{,}200$
trajectories), realized contamination against withheld ground truth was
$.041/.008/.024/.000/.003/.002$: never above the $0.10$ budget. The filter
is the deployment guarantee itself, not a proxy for it.

\paragraph{Entering a domain with zero training labels.}
Trained leave-one-corpus-out, with no web data at all, a judge already
certifies $0.560$ on \arb{}; one \rsi{}
round closes the gap to the in-domain \emph{GRPO} ceiling ($.585$) with zero
\arb{} training labels, and the same judge certifies $0.830$ on \webm{},
matching that ceiling outright (Table~\ref{tab:rsi}). Two controls: the same
175 CAL labels spent on supervision instead certify only $0.540$; where
transfer is weak (\agentsuite{}, AUROC $.794$), \rsi{} \emph{refuses to
harvest}: no certificate, no self-training.

\paragraph{In-domain growth, and where iteration ends.}
From the in-domain SFT judge on \agentsuite{}, certified harvesting from the
unseen RL pool lifts coverage to $.343$ in one round and $.347$ in
two, past the best supervised GRPO arm ($.321$), with no labels beyond
SFT's. Iteration is not free: on \arb{} a second round degrades
coverage ($.585\to.465$) even though its harvest stayed within budget
($.024$). The $\alpha$-bound controls how many pseudo-labels are \emph{wrong},
not how \emph{skewed} the harvested distribution is; a second pass over a
small one-sided pool amplifies its selection bias, and CAL cannot rank
rounds on 175 rows (Appendix~\ref{app:rsi}); so one round for transfer
starts and small pools, iteration only in-domain on large pools. On
saturated \webm{} the round is correctly useless (Table~\ref{tab:rsi}), as
Eq.~(\ref{eq:gates}) predicts.

\section{Conclusion}
\label{sec:conclusion}

Agent-evaluation judges deserve a better answer than an agreement rate: a
certificate, valid under task clustering, for the evaluation work a judge can
provably take over.

\section*{Limitations}
Every number in this paper comes from a single pre-registered run: task-level
splits fixed by one seed, hyperparameters chosen on calibration data, one test
read per configuration. The protocol prevents selection effects but does not
quantify run-to-run variance; the cluster bootstrap quantifies sampling
uncertainty over tasks, not over training randomness. \boot{}'s guarantee is
asymptotic in the number of task clusters: we validate it down to 20 clusters
by simulation and audit it out-of-sample on every corpus, but it is not a
finite-sample theorem, and the anti-conservativeness of the naive i.i.d.\
certificate is demonstrated on one real corpus in the high-clustering
multi-rollout regime rather than universally. The certifiability model is fit
on seven corpora and should be read as an empirical trend. On our corpora the
release side rarely certifies at practical budgets, so the deployed guarantee
mostly removes review of \emph{failures}; and certified self-training is a
one-round recommendation. Iterating on small one-sided pools degraded
coverage even with the contamination bound intact.

\bibliography{refs}

\clearpage
\appendix
\section{Rank invariance of the certificate}
\label{app:rank}

\begin{proof}[Proof of Proposition~\ref{prop:rank}]
Let $\phi:[0,1]\to[0,1]$ be strictly increasing and $\tilde s = \phi\circ s$.
The grid $\Theta$ consists of empirical quantiles of the scores, so
$\tilde\Theta = \phi(\Theta)$ elementwise, and for every $\theta\in\Theta$ and
every trajectory, $\tilde s(x) \le \phi(\theta) \iff s(x)\le\theta$. All
quantities in Algorithm~\ref{alg:boot} (the per-task counts $n_g(\theta),
k_g(\theta)$, every bootstrap ratio $\widehat{\err}^{(b)}(\theta)$, the
quantile test, and the coverage of the selected threshold) depend on scores
only through such comparisons, hence are identical under $\phi$. The same
argument applies to the naive and DEFF-corrected certificates, whose grids and
counts are constructed the same way. Platt scaling, temperature scaling, and
isotonic regression (with ties broken consistently) are monotone, which gives
the statement in the main text.
\end{proof}

Two practical corollaries. First, the frontier failures in
Table~\ref{tab:frontier} cannot be repaired by post-hoc calibration on our 175
CAL labels or any amount of it; the orderings themselves are wrong. Second,
the judge's absolute probability calibration is irrelevant to certification;
what Eq.~(\ref{eq:score}) must get right is the \emph{ranking} of failures
below successes, which is what SFT and the GRPO arms improve. Throughout, the
exact binomial bound is Clopper--Pearson
\citep{be7c0fd0-f562-39ad-b8e0-716a276561d1} and the resampling principle is
Efron's \citep{10.1214/aos/1176344552}, applied at the cluster level
\citep{field2007bootstrapping}.

\section{Synthetic validity study}
\label{app:sim}

\paragraph{Data-generating process.}
Task effects $u_g \sim \mathcal N(0, \tau^2)$; cluster sizes $m_g \sim
1+\mathrm{Poisson}(7)$; labels $y_{gj} \sim \mathrm{Bernoulli}\big(\sigma(b_0
+ u_g)\big)$; scores $s_{gj} = \sigma\big(d\,(2y_{gj}-1) + c\,u_g +
\varepsilon_{gj}\big)$, $\varepsilon_{gj}\sim\mathcal N(0,1)$, with $(b_0, d,
c) = (-0.8,\, 2.2,\, 0.9)$ and $\tau$ set to $\{0.9, 2.6, 5.0\}$ to hit label
ICC $\rho \approx \{0.1, 0.5, 0.8\}$. Each of 300 trials draws a calibration
set of $G$ tasks, runs all five procedures ($B{=}800$, $|\Theta|{=}40$,
$\alpha{=}0.1$, $\delta{=}0.05$), and evaluates the selected threshold's true
selective error on a fresh draw of $1.5\times10^5$ trajectories. Violation =
fraction of trials with true error $>\alpha$, computed among trials where the
procedure certified anything.

\begin{table*}[t]
\centering
\caption{Full synthetic grid: violation / mean certified coverage.}
\label{tab:simfull}
\scriptsize
\setlength{\tabcolsep}{3.5pt}
\begin{tabular}{@{}llccccc@{}}
\toprule
$G$ & $\rho$ & naive CP & DEFF-CP & one-per-task & task-Hoeffding & \boot{} \\
\midrule
20 & .1 & .00 / .65 & .00 / .16 & .00 / .00 & .00 / .00 & .00 / .68 \\
20 & .5 & .00 / .53 & .00 / .00 & .00 / .00 & .00 / .00 & .01 / .61 \\
20 & .8 & .00 / .47 & .00 / .00 & .00 / .00 & .00 / .00 & .01 / .56 \\
50 & .1 & .00 / .69 & .00 / .67 & .00 / .00 & .00 / .00 & .00 / .69 \\
50 & .5 & .00 / .61 & .00 / .03 & .00 / .00 & .00 / .00 & .00 / .62 \\
50 & .8 & .00 / .57 & .00 / .00 & .00 / .00 & .00 / .00 & .00 / .57 \\
100 & .1 & .00 / .70 & .00 / .69 & .00 / .29 & .00 / .00 & .00 / .70 \\
100 & .5 & .00 / .62 & .00 / .58 & .03 / .06 & .00 / .00 & .00 / .63 \\
100 & .8 & .00 / .58 & .00 / .23 & .00 / .00 & .00 / .00 & .00 / .58 \\
500 & .1 & .00 / .71 & .00 / .71 & .00 / .69 & .00 / .66 & .00 / .71 \\
500 & .5 & .00 / .64 & .00 / .63 & .00 / .62 & .00 / .56 & .00 / .64 \\
500 & .8 & .00 / .60 & .00 / .58 & .00 / .58 & .00 / .00 & .00 / .59 \\
\bottomrule
\end{tabular}
\end{table*}

\paragraph{Adversarial designs.}
Three additional designs target the regimes where a reviewer would expect the
naive certificate to break at population level: (i) parameters matched to
\webm{} ($G{=}32$, $\rho{\approx}0.8$, near-perfect discrimination
$d{=}4.5$, low base rate); (ii) size--outcome correlation, $u_g \gets u_g -
0.25\,(m_g - \bar m)$, so large clusters fail more; (iii) heavy-tailed cluster
sizes, $m_g \sim 1{+}\min(\lfloor 4\,\mathrm{Pareto}(1.3)\rfloor{+}1, 60)$.
Naive violation rates were $.003$, $.000$, $.000$ respectively (\boot{}:
$.000$ in all three, equal coverage). We report this against ourselves: the
Bonferroni-slack-protected naive certificate is population-valid in every
synthetic regime we constructed honestly, and its demonstrated failure is the
task-resampling audit on real high-$\rho$ data (Appendix~\ref{app:audits}).
The two statements are compatible: the audit measures the dispersion of
realized error on corpora like the observed one, which is the quantity a
deployment re-run experiences. Together they motivate the uniform-safety
framing of Section~\ref{sec:cert} rather than a blanket invalidity claim.

\paragraph{Monte-Carlo error.}
With 300 trials, a true violation rate of $0.05$ has standard error
$\approx0.013$; the \boot{} estimates of $\le 0.01$ are consistent with a true
rate at or below the budget in every cell.

\section{Cluster-valid baselines on the real corpora}
\label{app:baselines}

\begin{table*}[t]
\centering
\caption{Certified reject coverage @ $\alpha{=}0.1$ on real test scores, five
procedures. One-per-task uses a single pre-registered draw (seed 42);
task-Hoeffding tests the mean of per-task error rates over covered tasks
(estimand: task-weighted error), Bonferroni $\delta/40$ throughout.}
\label{tab:realbase}
\scriptsize
\setlength{\tabcolsep}{4pt}
\begin{tabular}{@{}llrrccccc@{}}
\toprule
corpus & judge & $G$ & $\rho$ & naive & DEFF & one/task & Hoeffding & \boot{} \\
\midrule
\agentsuite{} & base & 84 & .25 & .000 & .000 & .000 & .000 & .000 \\
\agentsuite{} & SFT & 84 & .25 & .325 & .000 & .000 & .000 & .325 \\
\agentsuite{} & GRPO$_{\mu3}$ & 84 & .25 & .321 & .000 & .000 & .000 & .321 \\
\tbench{} & base & 13 & .43 & .340 & .000 & .000 & .000 & .315 \\
\tbench{} & SFT & 13 & .43 & .379 & .000 & .000 & .000 & .379 \\
\tbench{} & GRPO$_{\mu5}$ & 13 & .43 & .369 & .000 & .000 & .000 & .369 \\
\arb{} & base & 68 & .49 & .465 & .000 & .000 & .000 & .510 \\
\arb{} & SFT & 68 & .49 & .465 & .000 & .000 & .000 & .560 \\
\arb{} & GRPO$_{\mu3}$ & 68 & .49 & .430 & .000 & .000 & .000 & .585 \\
\webm{} & base & 32 & .80 & .857 & .000 & .000 & .000 & .832 \\
\webm{} & SFT & 32 & .80 & .859 & .000 & .000 & .000 & .789 \\
\webf{} & base & 32 & .81 & .707 & .000 & .000 & .000 & .707 \\
\webf{} & SFT & 32 & .81 & .758 & .000 & .000 & .000 & .758 \\
\codeo{} & base & 382 & .64 & .000 & .000 & .000 & .000 & .000 \\
\codes{} & base & 7 & .00 & .000 & .000 & .000 & .000 & .619 \\
\bottomrule
\end{tabular}
\end{table*}

The two finite-sample-valid constructions certify zero on all fifteen rows,
including \codeo{} with $382$ tasks (its judge is too weak) and \agentsuite{}
with $84$ (the $\delta/40$-corrected exact bounds need more). The \codes{} row
illustrates the $G<20$ caveat from the main text: \boot{} nominally certifies
$.619$ from seven clusters, outside the regime our simulation validates, and
we do not use that number anywhere.

\section{Certificate audits}
\label{app:audits}

\paragraph{Task-resampling audit of the naive certificate.}
For each corpus and judge, the naive i.i.d.\ certificate selects its
maximal-coverage threshold; we then resample tasks with replacement
($3{,}000$ draws) and record how often the realized selective error at that
threshold exceeds $\alpha$. Values $\le\delta{=}.05$ are consistent with the
promised confidence:
\webm{} SFT: $.069$ ($\alpha{=}.1$), $.175$ ($\alpha{=}.2$);
\webm{} base: $.017$, $.021$;
all other corpus--judge pairs $\le .032$
(\agentsuite{} SFT $.004/.032$, GRPO $.001/.017$; \arb{} all $\le.004$;
\tbench{} SFT $.000/.027$; \webf{} $\le.016$).
The naive certificate's failure is confined to, and severe in, the high-$\rho$
multi-rollout regime, where it also claims the most ($.98$ coverage at
$\alpha{=}.2$).

\paragraph{CAL$\to$TEST audits of \boot{}.}
Thresholds calibrated on CAL tasks, realized selective error measured once on
task-disjoint TEST, $\alpha=0.1$:
\arb{} SFT $.028$, GRPO$_{\mu3}$ $.027$, base $.037$;
\agentsuite{} SFT $.012$, GRPO$_{\mu1}$ $.012$, GRPO$_{\mu3}$ $.011$;
\webm{} base $.008$, SFT $.015$, GRPO$_{\mu5}$ $.008$;
\webf{} base $.016$, SFT $.022$, GRPO$_{\mu3}$ $.016$.
At $\alpha=0.2$ all audits likewise pass (maximum realized error $.116$,
on \arb{} GRPO$_{\mu3}$). Every deployed threshold in the paper comes from
this protocol.

\section{Full result tables}
\label{app:fulltables}

\paragraph{All GRPO arms, both headline corpora} (certified reject coverage,
$\alpha{=}0.1$; CAL-certified coverage used for selection in parentheses):

\begin{table*}[t]
\centering
\scriptsize
\setlength{\tabcolsep}{5pt}
\begin{tabular}{@{}lcccccc@{}}
\toprule
corpus & SFT & acc.\ ($\lambda{=}\mu{=}1$) & reject $\mu{=}3$ & reject $\mu{=}5$ & release $\lambda{=}5$ & CAL pick \\
\midrule
\agentsuite{} & .293 & .297 (.150) & .321 (.149) & .295 (.145) & .293 & $\mu{=}1$ \\
\arb{} & .560 & .585 (.514) & .585 (.537) & .585 (.514) & .585 & $\mu{=}3$ \\
\bottomrule
\end{tabular}
\end{table*}

\paragraph{$\alpha=0.2$, reject side (\boot{})}:
\arb{}: base $.635$, SFT $.710$, GRPO$_{\mu1}$ $.735$, GRPO$_{\mu3}$ $.710$,
GRPO$_{\mu5}$ $.715$;
\agentsuite{}: SFT $.422$, GRPO$_{\mu1}$ (CAL pick) $.424$, GRPO$_{\mu3}$
$.391$, base $.120$;
\webm{}: base $.883$, SFT $.859$, GRPO$_{\mu5}$ $.885$;
\webf{}: base $.707$, SFT $.758$, GRPO$_{\mu3}$ $.758$;
\tbench{} (grey regime): base $.442$, SFT $.469$, GRPO$_{\mu5}$ $.423$.
Frontier at $\alpha{=}0.2$ on \arb{}: gpt-5.6-sol CoT $.784$, sonnet-5 CoT
$.675$, all other configurations $.000$; on \agentsuite{}: sonnet-5 CoT
$.124$, gpt-5.6-sol CoT $.000$.

\paragraph{Release side.}
The release budget certifies far less everywhere: the only non-zero cells at
$\alpha{=}0.2$ are on \agentsuite{} (SFT $.27$) and nothing certifies at
$\alpha{=}0.1$ on any corpus. We therefore report the two-sided formulation as
framework and validate the reject side; release-side validation at realistic
budgets needs corpora with more certifiable success mass than ours have.

\paragraph{Frontier parse rates.}
CoT with verbalized probability parsed $199/200$ (gpt-5.6-sol), $182/200$
(gemini-2.5-pro), $191/200$ (sonnet-5) on \arb{}; $1194/1280$ and $741/1280$
for gpt-5.6-sol and sonnet-5 on \agentsuite{}; SC-$k$ parsed $\ge 199/200$;
gpt-5.2's API returns no token logprobs, so the frontier-logprob row uses
gpt-4o ($200/200$). Unparsed rows are excluded from that judge's scores (not
counted against it).

\section{Corpora}
\label{app:corpora}

All corpora share one schema (task id, rendered trajectory text, binary
outcome) and one split procedure: tasks are shuffled with a fixed seed and
assigned $40/30/15/15$ percent to SFT/RL/CAL/TEST (test fraction raised to
$25\%$ on the MiniWoB corpora to clear $n_{\min}$); trajectories follow their
task, so all four splits are task-disjoint, verified programmatically.
Rendered text contains the task instruction and per-step actions and
observation excerpts, middle-truncated to a fixed character budget; no reward
signal, evaluator output, or environment verdict is ever rendered.

\textbf{\agentsuite{}} ($\tau^2$-bench;
\citealp{barres2025tau2benchevaluatingconversationalagents}): tool-use
dialogues across airline, retail, and telecom domains; outcomes from the
benchmark's database-state checks. $6{,}400$ trajectories over $421$ tasks
($1{,}280$/$84$ in TEST).
\textbf{\tbench{}} \citep{merrill2026terminalbenchbenchmarkingagentshard}:
terminal-session tasks with programmatic goal checks.
\textbf{\arb{}}: the AgentRewardBench release
\citep{lu2025agentrewardbench}, $1{,}302$ trajectories of four agent models
on five live web benchmarks (WebArena, VisualWebArena, AssistantBench,
WorkArena, WorkArena++; \citealp{zhou2024webarena, koh2024visualwebarena,
yoran2024assistantbench, drouin2024workarena, boisvert2407workarena++}), each
expert-annotated; we use the $1{,}289$ trajectories with a majority success
label. Cluster = one web task, pooling the agents that attempted it.
\textbf{\webm{}}: MiniWoB++ trajectories
\citep{liu2018reinforcementlearningwebinterfaces} we generate under
BrowserGym \citep{dechezelles2025browsergymecosystemwebagent} with two weak
agents (a 4B vision--language model and its SFT variant) across all $125$
registered tasks, eight seeds each, labels from the environment reward; the
trajectory text excludes the reward. \textbf{\webf{}}: same environments,
trajectories from two frontier agents (a reasoning model and a fast
commercial model), six/two seeds.
\textbf{\codeo{}}: code-repair trajectories from SWE-rebench
\citep{NEURIPS2025_21bec6ac} under the OpenHands scaffold
\citep{ICLR2025_a4b6ad6b}, whose outcome is a held-out test suite in the
SWE-bench style \citep{ICLR2024_edac78c3}; \textbf{\codes{}}: a small
multilingual corpus from Multi-SWE-bench \citep{NEURIPS2025_5afa9cb1}. Full
provenance, counts, and rendering parameters: Table~\ref{tab:corpora} and the
released scripts.

\section{Judge prompt}
\label{app:prompts}

One prompt is shared verbatim by base scoring, SFT supervision, GRPO rollouts,
and all trained-judge scoring; training and evaluation cannot drift apart.

\begin{quote}\small\ttfamily
\textbf{system:} You are a skeptical auditor of an AI agent's trajectory.
Decide whether the agent truly completed its task. The agent's own claims of
success are not evidence; judge only from what the trajectory shows.

\medskip
\textbf{user:} Task instruction:\\
"""$\langle$task$\rangle$"""\\[2pt]
Agent trajectory (actions and observation excerpts, possibly truncated):\\
$\langle$rendered trajectory$\rangle$\\[2pt]
Did the agent truly and completely succeed at the task? Answer with one word,
SUCCESS or FAIL.\\
VERDICT:
\end{quote}

The score (Eq.~\ref{eq:score}) reads the next-token distribution after
\texttt{VERDICT:}; both verdict words are single tokens for our base model.
Frontier CoT elicitation replaces the one-word instruction with step-by-step
reasoning ending in \texttt{P\_SUCCESS=<integer 0-100>}; SC-$k$ keeps the
one-word form at temperature $1.0$.

\section{Training details}
\label{app:training}

\paragraph{Judge.} Qwen3.5-4B base \citep{yang2025qwen3technicalreport}, LoRA
\citep{hu2021loralowrankadaptationlarge} on all attention and MLP projections
($32.5$M trainable parameters, $0.77\%$), reasoning disabled at train and test
time. The \webm{} generating agents are Qwen3-VL models
\citep{bai2025qwen3vltechnicalreport}. \textbf{SFT}: one epoch over the corpus's SFT split, lr $2\times10^{-5}$
cosine, batch 2 with gradient accumulation 4, max sequence $5{,}120$ tokens,
completion-loss only (the single verdict token). \textbf{GRPO}: initialized
from the SFT adapter, reward per Eq.~(\ref{eq:reward}), 8 samples per prompt,
lr $2\times10^{-6}$, max completion 40 tokens, 150 optimizer steps (40 on
\tbench{}), batch 8 per device on 2--6 GPUs (DDP; identical results at
different world sizes, as expected for a fixed global batch). \textbf{RSI
rounds}: identical SFT recipe over source data plus harvested pseudo-labels.
All runs use seed 42. Hardware: one node with 96\,GB GPUs; a full
SFT$+$GRPO$+$certification pass for one corpus takes $1$--$3$ GPU-hours; every
certificate and analysis in the paper runs on CPU in minutes.

\section{Leave-one-corpus-out transfer}
\label{app:loco}

Transfer judges are SFT-trained on a balanced sample ($\le 2{,}600$ rows,
equal per source benchmark) of the pooled SFT splits of all corpora
\emph{except} the held-out one, then certified on the held-out corpus's
untouched TEST via CAL-calibrated \boot{}. Two variants for \arb{}: excluding
only \arb{} itself (other web corpora remain; certifies $.575$, AUROC $.915$)
and the strict variant excluding \emph{all} web corpora (certifies $.560$,
AUROC $.902$), the number quoted as ``zero web exposure.'' The reverse
direction is the honest negative: a judge trained on everything except
$\tau^2$ transfers at AUROC $.794$ and certifies $.045$; tool-use dialogue
formats are idiosyncratic in a way generic trajectory-reading does not cover,
which is precisely the situation the \rsi{} gate then refuses to harvest in.
The \webm{} transfer row in Table~\ref{tab:rsi} uses the strict
zero-web-exposure judge.

\section{\rsi{} protocols and the stopping-rule negative}
\label{app:rsi}

\begin{algorithmwide}[t]
\caption{\rsi{} (one round, reject side)}
\label{alg:rsi}
\begin{algorithmic}[1]
\Require judge $s_0$; unlabeled pool $\mathcal U$; CAL split by task into halves $(\mathcal C_A, \mathcal C_B)$; $\alpha$
\State $\hat\theta \gets \boot{}(s_0, \mathcal C_A, \alpha)$
\Comment{calibrate harvest region on half the CAL tasks}
\State $\mathcal H \gets \{x \in \mathcal U : s_0(x) \le \hat\theta\}$, pseudo-labeled $\tilde y = 0$
\Comment{$\Pr(y{=}1\,|\,x\in\mathcal H) \le \alpha$ w.p.\ $1{-}\delta$}
\State $s_1 \gets$ fine-tune on source data $\cup\, \mathcal H$;
\Return $s_1$, certified on $\mathcal C_B$, audited on TEST
\Comment{pseudo-labels never enter CAL/TEST}
\end{algorithmic}
\end{algorithmwide}

\paragraph{Protocols.} Pool = the target corpus's SFT$+$RL splits with labels
withheld (transfer starts) or its unseen RL split (in-domain starts); harvest
threshold from \boot{} on half of CAL's tasks (seed-42 halves); round-$k$
training set = source data $+$ current harvest; certification of every round
on the untouched other half and TEST. Ground-truth labels of harvested
trajectories are used only to \emph{report} contamination, never in training.

\begin{table*}[t]
\centering
\small
\begin{tabular}{@{}llccc@{}}
\toprule
target & start & harvest & contamination & TEST cert @ .1 \\
\midrule
\arb{} & transfer, round 1 (full-CAL pilot) & 437/914 & .041 & .585 \\
\arb{} & transfer, round 1 (split-CAL) & 262/914 & .008 & .585 \\
\arb{} & transfer, round 2 & 336/914 & .024 & .465 \\
\agentsuite{} & in-domain, round 1 & 296/2200 & .000 & .343 \\
\agentsuite{} & in-domain, round 2 & 338/2200 & .003 & .347 \\
\webm{} & transfer, round 1 & 837/1296 & .002 & .820 \\
\agentsuite{} & transfer & \multicolumn{3}{c}{gate closed: CAL certifies nothing} \\
\bottomrule
\end{tabular}
\caption{All \rsi{} harvests. The full-CAL pilot row shows the protocol
before the split-CAL fix; its result is unchanged by the fix, and all
reported numbers use the split-CAL protocol.}
\label{tab:allharvests}
\end{table*}

\paragraph{All harvests.}
Table~\ref{tab:allharvests} lists every harvest, including the full-CAL
pilot run before the split-CAL hygiene fix.

\paragraph{Why not a stopping rule.}
A natural iteration rule (continue while CAL-certified coverage
improves) fails empirically: on \arb{}, CAL certifies $.514 / .389 / .486$
for rounds $0/1/2$ while TEST moves $.575 / .585 / .465$; a 175-row CAL cannot
rank models this close (the same resolution limit as arm selection on
\agentsuite{}). We therefore recommend the fixed policy of
Section~\ref{sec:rsi} rather than an adaptive rule the calibration data
cannot support.

\section{Certifiability model: validation}
\label{app:model}

Table~\ref{tab:lawval} reports leave-one-corpus-out validation of
Eq.~(\ref{eq:law}) and ablated forms (target: best certified coverage per
corpus).

\begin{table*}[t]
\centering
\small
\begin{tabular}{@{}lccc@{}}
\toprule
form & in-sample $R^2$ & LOOCV $R^2$ & LOOCV MAE \\
\midrule
$(1-\pi)(2A_0-1)$ \hfill (Eq.~\ref{eq:law}) & .977 & .964 & .039 \\
$(1-\pi)(2A_0-1)(1-e^{-n_{\mathrm{eff}}/K})$, $K{=}30$ & .835 & .707 & .124 \\
\quad same, $K$ tuned by LOOCV ($K{=}10$) & .977 & .967 & .039 \\
$(2A_0-1)$ only & .913 & .471 & .132 \\
$(2A_0-1)(1-e^{-n_{\mathrm{eff}}/30})$ (no $\pi$) & .656 & .382 & .196 \\
\bottomrule
\end{tabular}
\caption{Certifiability-model validation: in-sample and leave-one-corpus-out
fit of Eq.~(\ref{eq:law}) and ablated forms.}
\label{tab:lawval}
\end{table*}

The saturation factor adds nothing once $K$ is tuned and hurts when fixed;
the base-rate factor is load-bearing. Caveats stated in the main text apply:
$n{=}7$ corpora, and the zero-coverage corpus anchors the low end of the fit.

\section{Cost accounting}
\label{app:cost}

Certified coverage converts to removed review time linearly: at $t$ minutes
of human review per trajectory, a corpus with certified coverage $c$ saves
$1000\,c\,t/60$ reviewer-hours per thousand trajectories, with residual risk
bounded by $\alpha$ on the removed portion. At $t{=}6$:
\agentsuite{} $29.7$\,h, \arb{} $58.5$\,h, \webm{} $83.6$\,h, \webf{}
$75.8$\,h per thousand. Judge-side marginal cost is one forward pass of a 4B
model per trajectory (batched, a single GPU sustains ${\sim}10$
trajectories/second at our sequence lengths); the strongest frontier
configuration spends ${\sim}10^3$ reasoning tokens per trajectory at API
prices, the basis for the ${\sim}100\times$ figure in
Section~\ref{sec:judges}. We keep all dollar figures in footnotes because
they inherit local prices; the hour figures do not.

\end{document}